
\documentclass[letterpaper, 10 pt, conference]{ieeeconf}  

\IEEEoverridecommandlockouts                              

\overrideIEEEmargins                                      
\usepackage[linesnumbered,ruled,vlined]{algorithm2e}
\usepackage{graphicx}
\usepackage[caption=false,font=footnotesize,lofdepth,lotdepth]{subfig}

\usepackage{graphicx,adjustbox}
\usepackage{cite}

\usepackage{amsmath}
\usepackage{amssymb}
\usepackage{amsfonts}
\usepackage{graphicx}
\usepackage{color}
\usepackage{url}
\usepackage{soul}
\usepackage[bookmarks=true]{hyperref}
\usepackage[usenames,x11names]{xcolor}
\usepackage{xspace} 
\usepackage{caption}
\usepackage{multicol}
\usepackage[normalem]{ulem}

\usepackage[linesnumbered,vlined,ruled]{algorithm2e}
\usepackage{multirow} 
\usepackage{tikz,pgf}
\usetikzlibrary{arrows,automata,shapes,calc,backgrounds,spy,positioning,shadows}


\graphicspath{{figures/}}




\SetKw{Break}{break}

\newtheorem{theorem}{Theorem}[section]
\newtheorem{proposition}[theorem]{Proposition}

\newtheorem{definition}{Definition}[section]

\newtheorem{remark}[theorem]{Remark}

\newtheorem{example}{Example}[section]

\newtheorem{problem}{Problem}[section]

\newtheorem{assumption}{Assumption}






%

\newcommand{\disha}{\color{black}}

\newcommand{\CA}[1]{\mathcal{#1}}
\newcommand{\BF}[1]{\mathbf{#1}}
\newcommand{\BB}[1]{\mathbb{#1}}

\newcommand{\BS}[1]{\boldsymbol{#1}}

\newcommand{\True}{\top}

\newcommand{\virtual}{\rhd}

\newcommand{\AP}{{AP}}
\newcommand{\Pref}{{R}}


\newcommand{\buchi}{B\"uchi\ }

\newcommand{\PA}{\mathcal{P}}

\newcommand{\FA}{\mathcal{A}}
\newcommand{\TS}{\mathcal{T}}

\newcommand{\ES}{\mathcal{E}}





\newcommand{\card}[1]{\left| {#1} \right|}
\newcommand{\spow}[1]{2^{#1}}
\newcommand{\lrel}[1]{\left| {#1} \right|_{LR}}

\newcommand{\dto}{\rightrightarrows}
\newcommand{\range}[1]{\left[\left[{#1}\right]\right]}
\newcommand{\any}{\bullet}

%

\graphicspath{ {./images/} }

\newcommand{\ditto}{---{\tt "}---}




\title{\LARGE \bf
Automata-based Optimal Planning with Relaxed Specifications
}

\author{Disha Kamale$^{1}$, Eleni Karyofylli$^{2}$, and Cristian-Ioan Vasile$^{1}$
\thanks{$^{1}$Disha Kamale and Cristian-Ioan Vasile are with the Mechanical Engineering and Mechanics Department,
Lehigh University, Bethlehem, PA 18015
        {\tt\small \{ddk320, cvasile\}@lehigh.edu}}%
\thanks{$^{2}$Eleni Karyofylli is with the Electrical and Computer Engineering Department,
Lehigh University, Bethlehem, PA 18015
{\tt\small elk222@lehigh.edu}}%
}

\begin{document}

\maketitle
\thispagestyle{empty}
\pagestyle{empty}

\begin{abstract}
In this paper, we introduce an automata-based framework for planning with
relaxed specifications.
User relaxation preferences are represented as \textit{weighted finite state edit systems}
that capture permissible operations on the specification, substitution and deletion of tasks,
{\color{black}with complex constraints on ordering and grouping}.
We propose a \textit{three-way product automaton} construction method that
allows us to compute minimal relaxation policies for the robots using {\disha{standard}} shortest path algorithms.
The three-way automaton captures the robot's motion, specification satisfaction,
and available relaxations at the same time.
Additionally, we consider a bi-objective problem that balances temporal relaxation
of deadlines within specifications with changing and deleting tasks.
Finally, we present \textcolor{black}{the runtime performance and a case study that highlights different modalities of our framework.}

%
\end{abstract}

\section{\uppercase{Introduction}}
\label{sec:intro}


Robots are increasingly required to perform complex tasks with rich temporal
and logical structure. In recent years, automata-based approaches have been widely used
for solving robotic path planning problems wherein an automaton
is constructed from mission specifications posed as temporal logic formulae (e.g. LTL, scLTL, STL, TWTL)~\cite{VaBe-RSS-2014, Guo-2015, Tumova:2014, AkVaBe-ICRA-2016 }.
Using shortest path algorithms on the product models between
abstract robot motion models and specification automata,
optimal satisfying trajectories are synthesized.

This traditional approach, albeit useful, does not consider modifying
mission specifications in case satisfaction is infeasible.
This implies that even the sub-parts of the specification
that might be feasible will not be executed.
In many real-world scenarios, it is often preferable that the robot
performs at least some part of the assigned task even if
it cannot be satisfied in its entirety.
Consider the following mission specification for data collection:
\textit{``Collect data from region $G_1$ and then region $G_2$
and then upload it at $U_1$.
Collect data from $G_3$ and upload it at $U_2$. Always avoid obstacles.''}
In case an obstacle makes it impossible to reach $U_2$,
it is still preferred to receive the data from $G_1$ and $G_2$.
Thus, we need to consider relaxed satisfaction semantics
to handle infeasible mission specifications.



In literature, the problem of specification relaxation has been formulated in various ways.
\emph{Minimum violation} is considered in~\cite{Tumova:2016,Vasile-2017-rules,Tumova:2013}
for self-driving cars,
where policies are computed with minimal rules of the road
violation based on priorities.
Their approach is based on the removal of violating symbols
from the input of the specification automata to produce satisfying runs.
A related problem is considered in~\cite{Hauser:2013,Hauser:2014},
where the focus is on removing fewest geometric constraints
for object manipulation.
In~\cite{Lahijanian-2016}, \emph{minimum revision} of tasks for office robots is explored.
Their approach allows changing of tasks based on user-provided substitution costs.
A similar problem is tackled in~\cite{Kim-2015},
but for infinite horizon case with \buchi automata.
Both works modify the input stream of the specification automaton
to induce feasibility.
\emph{Partial satisfaction}~\cite{Lacerda-2015,Lahijanian:2015}
approaches aim to compute policies that minimize distance to satisfaction
given by paths to accepting states in specification automata.
In a different direction, \cite{Vasile-2017-rules,AkVaBe-ICRA-2016,PeVaBe-WAFR-2016}
consider \emph{temporal relaxation} of deadlines to complete missions.
Their approach introduces annotated automata that capture all deadline relaxations
from specifications, to compute policies with minimal delays.
Some of these works combine relaxation of specifications
with maximizing satisfaction probability~\cite{Lacerda-2015,Lahijanian-2016,Guo-2018b}.
All these works use automata-based techniques.
However, all have specialized approaches that can not be readily combined.
Moreover, they operate on a symbol-by-symbol basis rather than words translations
that capture rich relaxation preferences on groups of tasks.

In this paper, we introduce an automata-based framework
that captures {\color{black}{the notion of relaxation from several existing}} approaches 
and generalizes them to operate on groups of tasks (words). We decompose the problem of robot motion planning
into a high-level planning and a low-level control problem.
As in~\cite{ReyesCastro:2013,Tumova:2013, Vasile-2017-rules,Vasile-2017-twtl,Lahijanian-2016},
our focus is on symbolic path planning.
The robot motion is abstracted as a weighted transition system (TS)
with the regions of interest as states.
Mission specification are given as deterministic finite state automata
obtained from finite horizon temporal logic formulae
such as syntactially co-safe LTL (scLTL) and TWTL formulae.
Users also specify relaxation preferences in the form of
{\color{black} regular expressions (RE) which we translate to}
weighted finite state edit systems (WFSE)~\cite{kari2003finite}
that capture differences between pairs of input words.
The WFSEs determine the sets of permissible edit operations  (substitution or skipping),
on single or groups of tasks, {\color{black}along with} their costs when the mission specification is infeasible. 
{\disha{The user-specified relaxation rules enable the framework to be used in complex situations without much computational expense on semantically understanding the environment and deriving the rules.}}
We introduce a three-way product automaton construction method that captures
the motion of the robot, the specification satisfaction, and possible relaxations
at the same time.
We compute minimal relaxation robot trajectories using shortest path algorithms
on the proposed product model.
Additionally, we leverage the framework in~\cite{Vasile-2017-twtl} for temporal relaxation,
and consider bi-objective optimal synthesis problem that balances relaxation of deadlines
with {\disha{task relaxations}}.

{\disha{This work proposes a framework that brings together the core notions of several automata-based methods for planning with relaxation and allows for handling complex specifications and relaxation preferences. }}
The main contributions of the paper are:
(1) {\color{black}
the formulation of the \emph{minimum relaxation problem}
that unifies several problems from the literature, and
generalizes them to relaxation rules with memory;
}
(2) an automata-based formalism to capture
user relaxation preferences via WFSEs;
(3)
{\color{black}
an automata-based planning framework that uses
a novel three-way product automaton construction between
the motion, specification, and relaxation preference models;
}
and 
(4) case studies that demonstrate different instances
of specification relaxation and the runtime performance.
{\color{black}
To the best of our knowledge, this is the first time
relaxation rules are considered that account for complex ordering
and grouping of sub-tasks when revising mission specifications.
}

\section{\uppercase{Preliminaries}}
\label{sec:prelim}

In this section, we introduce notation used throughout the paper,
and briefly review the main concepts from formal languages,
automata theory, and formal verification.
For a detailed exposition of these topics, we refer the reader
to~\cite{Baier2008,Hopcroft2006} and the references therein.

We denote the range of integer numbers as $\range{a, b} = \{a, \ldots, b\}$,
and $\range{a} = \range{0, a}$.

Let $\Sigma$ be a finite set.
We denote the cardinality and the power set of $\Sigma$
by $\card{\Sigma}$ and $\spow{\Sigma}$, respectively.
A \emph{word} over $\Sigma$ is a finite or infinite sequence of
elements from $\Sigma$. In this context, $\Sigma$ is also
called an \emph{alphabet}. The length of a word $w$ is denoted by
$\card{w}$.
Let $k$, $i\leq j$ be non-negative integers.
The $k$-th element of $w$ is denoted by $w_k$, and
the \emph{sub-word} $w_i,\ldots, w_j$ is denoted by $w_{i, j}$.
Let $I=\{i_0,i_1 \ldots\} \subseteq \range{\card{w}}$.
The \emph{sub-sequence} $w_{i_0}, w_{i_1}\ldots$ is
denoted by $w_I$.
A set of words over an alphabet $\Sigma$ is called
a \emph{language} over $\Sigma$.
The language of all finite words over $\Sigma$
is denoted by $\Sigma^*$.


\begin{definition}[Deterministic Finite State Automaton]
\label{def:dfa}
A deterministic finite state automaton (DFA) is a tuple
$\FA = (S_\FA, s_0^\FA, \Sigma, \delta_\FA, F_\FA)$, where:
$S_\FA$ is a finite set of states;
$s^\FA_0 \in S_\FA$ is the initial state;
$\Sigma$ is the input alphabet;
$\delta_\FA \colon S_\FA \times \Sigma \to S_\FA$ is the transition function;
$F_\FA \subseteq S_\FA$ is the set of accepting states.
\end{definition}

A trajectory of the DFA $\BF{s} = s_0 s_1 \ldots s_{n+1}$ is generated by
a finite sequence of symbols $\BS{\sigma} = \sigma_0 \sigma_1 \ldots \sigma_n$
if $s_0 = s^\FA_0$ is the initial state of $\FA$ and
$s_{k+1} = \delta_\FA(s_k, \sigma_k)$
for all $k \geq 0$.
A finite input word $\BS{\sigma}$ over $\Sigma$ is said to be accepted
by a finite state automaton $\FA$ if the trajectory of $\FA$ generated
by $\BS{\sigma}$ ends in a state belonging to the set of accepting states, i.e., $F_\FA$.
The {\em (accepted) language} of a DFA $\FA$ is
the set of accepted input words denoted by $\CA{L}(\FA)$.



\begin{definition}[Transition System]
A weighted transition system (TS) is a tuple
$\TS = (X, x^\TS_0, \delta_\TS, \AP, h, w_\TS)$, where:
$X$ is a finite set of states;
$x^\TS_0 \in X$ is the initial state;
$\delta_\TS \subseteq X \times X$ is a set of transitions;
$\AP$ is a set of properties (atomic propositions);
$h \colon X \to \spow{\AP}$ is a labeling function;
$w_\TS \colon \delta_\TS \to \BB{Z}_{> 0}$ is a weight function.
\end{definition}

A \emph{trajectory} (or run) of the system is an infinite sequence of
states $\BF{x} = x_0 x_1 \ldots$ such that
$(x_k, x_{k+1}) \in \delta_\TS$
for all $k \geq 0$, and $x_0 = x^\TS_0$.
The set of all trajectories of $\TS$ is $Runs(\TS)$.
A state trajectory $\BF{x}$ generates an \emph{output trajectory} $\BF{o} = o_0 o_1 \ldots$,
where $o_k = h(x_k)$ for all $k \geq 0$.
We also denote an output trajectory by $\BF{o}=h(\BF{x})$.
The {\em (generated) language} corresponding to a TS $\TS$ is
the set of all generated output words, which we denote by $\CA{L}(\TS)$.
We define the weight of a trajectory as
$w_\TS(\BF{x}) = \sum_{k=1}^{\card{\BF{x}}} w_\TS(x_{k-1}, x_k)$.



\section{\textcolor{black}{\uppercase{Background on planning with relaxed Specifications}}}
\label{sec:background}


\textcolor{black}{
In this section, we review temporal logic-based planning problems that consider
specification relaxation in case of infeasibility.
In the subsequent sections, we unify and generalize all these problems,
and propose an automata-based framework amenable to off-the-shelf
synthesis methods instead of customized solutions.
For cohesiviness and clarity, we present the core features
of the relaxed TL planning problems,
in some cases, adapted to finite-time.
}

Throughout the {\color{black} paper}, we assume that the motion of
a robot is captured by a finite weighted transition system $\TS$.

{\color{black}
We consider finite-time specifications expressed using temporal logics (TL),
e.g. scLTL~\cite{Latvala03,Duret.13.atva},
TWTL~\cite{Vasile-2017-twtl}, BLTL~\cite{Tkachev13},
and Finite LTL~\cite{DeGiacomo:2013},
and regular expressions (RE)~\cite{Chen-2012,Hopcroft2006}.
}
We do not provide details on their syntax and semantics,
and instead point the reader to relevant references.
{\color{black}
All \textcolor{black}{of these representations }can be translated to DFAs using off-the-shelf tools.
Thus, we consider specifications given as a DFA $\FA$.
}
\subsection{Canonical Problem (CP)}
\label{sec:canonical}


\begin{problem}[Canonical]
\label{pb:canonical}
Find a trajectory for $\TS$
such that the output trajectory is accepted by $\FA$.\\
\emph{Optimality:} Minimize the weight of the trajectory.
\end{problem}
{\color{black}In the canonical problem, no relaxations are permitted.}
\subsection{Minimum Violation Problem (MVP)}
\label{sec:min-violation}

Let $\BF{o}$ be a word over $2^\AP$,
and
$\varpi$ per symbol violation cost.
The \emph{violation cost} of $\BF{o}$ with respect to $\FA$ is
$\min_{I\subseteq \range{\card{w}}} \varpi \card{I}$
s.t. 
$\BF{o}_{\range{\card{w}} \setminus I} \in \CA{L}(\FA)$.
%
The violation cost of a TS trajectory $\BF{x}$
is induced by the output word $\BF{o}=h(\BF{x})$.
\begin{problem}[Minimum violation]
\label{pb:min-violation}
Find a trajectory for $\TS$
such that a sub-sequence of the output trajectory
is accepted by $\FA$.\\
\emph{Optimality:} Minimize the violation cost of the trajectory.
\end{problem}
\subsection{Minimum Revision Problem (MRP)}
\label{sec:min-revision}

Let $\BF{o}$ be a word over $2^\AP$,
and $c\colon 2^\AP \times 2^\AP \to \BB{R}$ the symbol substitution cost.
The \emph{revision cost} of $\BF{o}$ with respect to $\FA$ is
$\min_{\BF{o}' \in \CA{L}(\FA)} \sum_{i=0}^{\card{\BF{o}}} c(\BF{o}_i, \BF{o}'_i)$
s.t.
$\card{\BF{o}'} = \card{\BF{o}}$,
where $\BF{o}'$ is the \emph{revised word}.

The symbol substitution cost function $c$ is defined such that
there is no penalty for no substitution, i.e., $c(\sigma, \sigma) = 0$
for all $\sigma \in 2^\AP$.
In most cases, $c$ is a non-negative symmetric function,
$c(\sigma_1, \sigma_2) = c(\sigma_2, \sigma_1) \geq 0$ for all
$\sigma_1, \sigma_2 \in 2^\AP$.

\begin{problem}[Minimum revision]
\label{pb:min-revision}
Let
$c$ be the symbol substitution cost.
Find a trajectory for $\TS$
such that a revision of the output trajectory is accepted by $\FA$.\\
\emph{Optimality:} Minimize the revision cost of the trajectory.
\end{problem}


\subsection{Hard-Soft Constraints Problem (HSC)}
\label{sec:hard-soft}

\begin{problem}
\label{pb:hard-soft}
Let $\FA_H$ and $\FA_S$ be two specification DFAs.
Find a trajectory for $\TS$
such that the output trajectory is accepted by $\FA_H$,
and, if possible, by $\FA_S$.\\
\emph{Optimality:} Minimize the cost of the trajectory.
\end{problem}

{\color{black}
We adapt the HSC problem from~\cite{Guo-2015} for finite-time specifications,
where we replace \buchi automata with DFAs.
}


\subsection{Partial Satisfaction (PS)}
\label{sec:partial-satisfation}

Let $\BF{o} \in 2^\AP$.
The \emph{continuation cost} of $\BF{o}$ with respect to
$\FA$ is
$\min_{\BF{o}^c \in (2^\AP)^*} \card{\BF{o}^c}$
s.t.
$\BF{o}' = \BF{o} \BF{o}^c \in \CA{L}(\FA)$,
where $\BF{o}'$ is a \emph{continuation} of $\BF{o}$.

\begin{problem}
\label{pb:partial-satisfation}
Find a trajectory for $\TS$
such that a continuation of the output trajectory
is accepted by $\FA$.\\
\emph{Optimality:} Minimize the cost of the continuation.
\end{problem}




{\color{black}
The problem minimizes the amount of work still needed to satisfy the
specification from partial trajectories.
}

\subsection{Temporal Relaxation (TR)}
\label{sec:temporal-relaxation}


In this section, we review Time window temporal logic (TWTL)~\cite{VaBe-RSS-2014},
a rich specification language for robotics applications with explicit time bounds.
{\color{black}
As opposed \textcolor{black}{to} previous relaxation semantics, temporal relaxation for TWTL
is defined based on the formulae structure (i.e., relaxation of deadlines)
rather than symbol operations on satisfying words.
For brevity, we omit most details and refer to~\cite{Vasile-2017-twtl}.
}
 
The syntax of TWTL formulae over a set of atomic propositions $\AP$ is:
\begin{equation*}
\label{eq:logic-def}
\phi :: = H^d s \mid H^d \lnot s \mid \phi_1 \land \phi_2 \mid \phi_1 \lor \phi_2 \mid \lnot \phi_1
\mid \phi_1 \cdot \phi_2 \mid [\phi_1]^{[a, b]},
\end{equation*}
where $s$ is either the ``true'' constant $\True$
or an atomic proposition in $\AP$;
$\land$, $\lor$, and $\lnot$ are the conjunction, disjunction, and negation
Boolean operators, respectively; $\cdot$ is the concatenation operator;
$H^d$ with $d \in \BB{Z}_{\geq 0}$ is the {\em hold} operator;
and $[\ ]^{[a,b]}$
is the {\em within} operator, $a, b \in \BB{Z}_{\geq 0}$ and $a \leq b$.
See~\cite{Vasile-2017-twtl} for the full description of semantics
and examples.


Let $\phi$ be a TWTL formula and $\BS{\tau}\in \BB{Z}^{m}$,
where $m$ is the number of {\em within} operators contained in $\phi$.
The \emph{$\tau$-relaxation} of $\phi$ is a TWTL formula $\phi(\BS{\tau})$,
where each subformula of the form $[\phi_i]^{[a_i, b_i]}$ is replaced by $[\phi_i]^{[a_i, b_i + \tau_i]}$.



Bottleneck and linear truncated temporal relaxation were introduced
in~\cite{Vasile-2017-twtl,PeVaBe-WAFR-2016}, respectively.
For brevity, we consider the linear temporal relaxation (LTR).
The LTR of $\phi(\BS{\tau})$ is $\lrel{\BS{\tau}} =\sum_j \tau_j$,
where $\phi(\BS{\tau})$ is a $\BS{\tau}$-relaxation of $\phi$.
%
%
%



\begin{problem}
\label{pb:temporal-relaxation}
Find a trajectory for $\TS$
such that the output trajectory satisfies the
relaxed formula $\phi(\BS{\tau})$ for some relaxation
$\BS{\tau}$ of the deadlines in $\phi$.\\
\emph{Optimality:} Minimize the linear temporal relaxation.
\end{problem}

\subsection{\color{black}Planning}
\textcolor{black}{
All the aforementioned problems are solved by constructing a standard product automaton between the motion model $\TS$ and the specification DFA $\FA$.
Planning with relaxed semantics is achieved via custom pre-processing procedures of $\FA$,
and custom shortest path algorithms.
In the following, we show that all these problems can be captured via an additional automata-based
model for \emph{user task relaxation}, and solved using standard shortest path methods applied
on a novel 3-way product.
Moreover, MVP and MRP are restricted to relaxations of a single symbol at a time.
Our framework can handle rich relaxation rules that involve changing groups of symbols (words).
}

\section{\uppercase{Problem Formulation}}
\label{sec:pb-form}

In this section, we introduce an optimal planning problem
for finite system abstractions with temporal logic goals.
The specifications are expressed as DFAs which can be obtained from
multiple temporal logics, e.g., scLTL~\cite{Latvala03,Duret.13.atva}, BLTL~\cite{Tkachev13},
Finite LTL~\cite{DeGiacomo:2013}, TWTL~\cite{Vasile-2017-twtl},
and regular expressions~\cite{Chen-2012,Hopcroft2006}
using off-the-shelf tools, e.g., \emph{spot}~\cite{Duret.13.atva},
\emph{scheck}~\cite{Latvala03},
\emph{pytwtl}~\cite{Vasile-2017-twtl}.
%
%
%
We define a cost function based on
user preferences on task removal and substitution in
case satisfying the given specification is infeasible.
Using the \emph{user task preference} we define an optimal
planning problem over the finite motion model,
where the specification language is enlarged
to ensure feasibility with appropriate penalties.

\begin{definition}[User Task Preference]
\label{def:task-preference}
Let $L$ be a language over the alphabet $2^\AP$.
A \emph{user task preference} is a pair $(\Pref, w_\Pref)$,
where $\Pref \subseteq L \times (2^\AP)^*$ is a relation that
captures how words in $L$ can be transformed to words from $(2^\AP)^*$,
and $w_\Pref \colon  \Pref \to \BB{R}$ represents
the cost of the word transformations.
\end{definition}

The relation $\Pref$ can also be understood as a multi-valued
function $\Pref\colon L \dto (2^\AP)^*$.

{\color{black}
\begin{assumption}
\label{asm:user-pref}
The representation of relation $R$ requires bounded memory.
\end{assumption}

Asm.~\ref{asm:user-pref} is a reasonable requirement in practice,
and allows similar expressivity as finite-time TLs and DFAs.
With general relations $\Pref$, we run into decidability issues~\cite{Hopcroft2006}.
}

{\color{black}
Robot motion is captured by TSs whose weights
represent either duration, distance, energy, or control effort.
For simplicity, in this paper, weights are transition durations.
}

\begin{problem}[Minimum Relaxation]
\label{pb:formulation-min-relax}
Given a transition system $\TS$, a specification DFA $\FA$,
and a task relaxation preference $(\Pref, w_\Pref)$,
find a trajectory $\BF{x} \in Runs(\TS)$ that minimizes
the task cost.
Formally, we have
\begin{equation}
\label{eq:task-cost-optimal-pb}
{
\begin{aligned}
  \min_{\BF{x} \in Runs(\TS)} & w_\Pref(\BF{o}, \BF{o}')\\
   \text{s.t.} \ & \BF{o} \in \CA{L}(\FA)\\
   & \BF{o}' = h(\BF{x}) \in \CA{L}(\TS) \cap \Pref(\BF{o})
\end{aligned}
}
\end{equation}
where $\Pref\colon \CA{L}(\FA) \dto (2^\AP)^*$
and $w_\Pref\colon R \to \BB{R}$.
\end{problem}




Task preferences can be used to substitute and delete
tasks which are associated with words.
These generalize edit-space operations on single symbols to words,
and the optimization problem Problem~\ref{pb:formulation-min-relax}
generalizes the Levenstein distance between languages of finite words.

{\color{black}
User task preferences $(\Pref, w_\Pref)$ can be represented in many ways.
We consider {\disha{the user preferences for relaxation provided as}} regular expressions (RE) and regular grammars that can
be readily translated to automata using standard methods~\cite{Hopcroft2006}.
Consider the following example.
\textcolor{black}{\begin{example}
\label{ex:task-relaxation2}
Suppose the task is to visit region P1 for 1 time unit followed by P2 for 2 time units. Should visiting either or both be not possible, the substitution rules are: Substitute the visit to P1 by visiting Q1 for 2 time units with a penalty of d1, and the visit to P2 by visiting S1 for 2 time units followed by S2 for 1 time unit with a penalty of d2. Formally, $\Pref = ((\any/\any, 0)^*(\{Q1\}/\{P1\},\disha{0}) (\{Q1\} / \epsilon,\disha{d1}) (\{S1\}/\{P2\},0)\allowbreak (\{S1\}/ \{P2\},0)$ ${\disha{(\{S2\}/\epsilon,d2)}}(\any/\any, 0)^*$, where $\any$ represents any symbol in $2^\AP$,
$\epsilon$ denotes a {\disha{empty}} symbol,
$\{Q1\}/\{P1\}$ denotes that $\{P1\}$ is substituted by $\{Q1\}$,
and d1, d2 denote the penalties for the corresponding substitutions.
Note that the transformation can be performed multiple times due to
the outer Kleene star operator. Alternatively, the transformation rules are $P1 \mapsto^{d1} Q1Q1$ and $P2P2 \mapsto^{d2} S1S1S2$.
and the possible alternatives are: a) $r_{1}' = Q1 Q1 P2 P2$, b) $r_{2}' = P1S1S1S2$,  and c) $r_{3}' =  Q1Q1S1S1S2$. {\disha {From Fig. \ref{fig:wfse-examples}, it is evident that the existing approaches that allow symbol-symbol translations (Fig. \ref{fig:wfse-symbol}) cannot capture these relaxations as opposed to the WFSE for word-word translations (Fig. \ref{fig:wfse-word}).}}
\end{example}}
}
{\color{black}
Penalties in user task preferences have multiple interpretations;
they can be additive, multiplicative, or percentage rate with respect
to the TS weights depending on the nature of the {\color{black} tasks and preferred relaxations.}
We use weight computation functions $f_w(\cdot)$
that combine TS and WFSE weights to capture these multiple semantics. 
}

{\color{black}
Next, we show that Prob.~\ref{pb:formulation-min-relax}
captures all problems from Sec.~\ref{sec:background}.

\begin{proposition}
\label{thm:univ-prob}
Problems~\ref{pb:canonical},~\ref{pb:min-violation},~\ref{pb:min-revision},~\ref{pb:hard-soft},~\ref{pb:partial-satisfation}, and~\ref{pb:temporal-relaxation} are instances of Prob.~\ref{pb:formulation-min-relax}
\end{proposition}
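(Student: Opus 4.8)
The plan is to prove the claim by \emph{direct construction}: for each of the six problems we exhibit a specification DFA together with a user task preference $(\Pref, w_\Pref)$ (and, where needed, a weight-combination function $f_w$ as in Sec.~\ref{sec:pb-form}) so that the feasible trajectory set of Prob.~\ref{pb:formulation-min-relax} coincides with that of the original problem and its objective value equals the original objective on every feasible trajectory. Recall that in Prob.~\ref{pb:formulation-min-relax} the word $\BF{o}\in\CA{L}(\FA)$ is existentially tied to $\BF{x}$ through the constraint $h(\BF{x})\in\Pref(\BF{o})$, so the problem is really $\min_{\BF{x}}\min_{\BF{o}\in\CA{L}(\FA):\, h(\BF{x})\in\Pref(\BF{o})} w_\Pref(\BF{o},h(\BF{x}))$; this inner minimization is exactly what reproduces the ``cost of a word to a language'' quantities (violation, revision, continuation, $\ldots$) of Sec.~\ref{sec:background}. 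In every construction $\Pref$ is one of a handful of \emph{regular} word relations — identity, arbitrary insertions, same-length substitutions, prefix truncation, deadline padding — each realized by a small WFSE, so Asm.~\ref{asm:user-pref} is met and the instance is well-formed.

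The four reductions CP, MVP, MRP, PS keep the given DFA $\FA$ and differ only in $(\Pref,w_\Pref)$. For CP: $\Pref$ is the identity on $\CA{L}(\FA)$ and $w_\Pref\equiv 0$, so feasibility forces $h(\BF{x})\in\CA{L}(\FA)$ and, since the relaxation penalty vanishes, the combined objective collapses to $w_\TS(\BF{x})$. For MVP: $(u,v)\in\Pref$ iff $u\in\CA{L}(\FA)$ and $v$ is obtained from $u$ by inserting symbols (equivalently $u$ is a subsequence of $v$), with $w_\Pref(u,v)=\varpi(\card{v}-\card{u})$; the inner minimization over such $u$ with $v=h(\BF{x})$ then equals the violation cost of $h(\BF{x})$. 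For MRP: $(u,v)\in\Pref$ iff $u\in\CA{L}(\FA)$ and $\card{u}=\card{v}$, with $w_\Pref$ summing the position-wise substitution costs $c$; minimizing over $u\in\CA{L}(\FA)$ recovers the revision cost. For PS: $(u,v)\in\Pref$ iff $u\in\CA{L}(\FA)$ and $v$ is a prefix of $u$, with $w_\Pref(u,v)=\card{u}-\card{v}$, recovering the continuation cost. Each of these relations is recognized by a WFSE with a single ``copy'' state (plus one ``insert'' state for MVP and PS), so Asm.~\ref{asm:user-pref} holds.

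For HSC we set $\FA:=\FA_H$, let $\Pref$ be the identity on $\CA{L}(\FA_H)$, and put $w_\Pref(u,u)=0$ if $u\in\CA{L}(\FA_S)$ and $1$ otherwise, comparing trajectories by the lexicographic pair $(w_\Pref,w_\TS)$ via $f_w$: feasibility forces $h(\BF{x})\in\CA{L}(\FA_H)$, and the objective first prefers trajectories that also satisfy $\FA_S$ whenever one exists and otherwise breaks ties by weight. The predicate ``$u\in\CA{L}(\FA_S)$'' is regular, so the WFSE is the identity transducer run in parallel with a copy of $\FA_S$ (charging $0$/$1$ by acceptance), which has $\card{S_{\FA_S}}$ states. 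For TR we invoke the annotated-automaton construction of~\cite{Vasile-2017-twtl}: for a TWTL formula $\phi$ there is a DFA $\FA_\infty$ with $\CA{L}(\FA_\infty)=\bigcup_{\BS{\tau}\in\BB{Z}^m_{\geq 0}}\CA{L}(\phi(\BS{\tau}))$ from whose weighted structure one reads, for each $w\in\CA{L}(\FA_\infty)$, the minimal linear relaxation $\ell(w)=\min\{\lrel{\BS{\tau}}: w\models\phi(\BS{\tau})\}$; taking $\FA:=\FA_\infty$, $\Pref$ the identity, and $w_\Pref(w,w)=\ell(w)$ makes the feasible set and objective of Prob.~\ref{pb:formulation-min-relax} equal those of Prob.~\ref{pb:temporal-relaxation}, and $\ell$ is a finite-memory function (a shortest-path quantity on the finite automaton $\FA_\infty$), so Asm.~\ref{asm:user-pref} is satisfied.

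The routine part is CP/MVP/MRP/PS: since those problems are already phrased in terms of edit operations on words, the reduction is essentially a renaming once the direction of $\Pref$ is fixed. The two delicate points are (i) reconciling the purely word-based objective $w_\Pref$ of Prob.~\ref{pb:formulation-min-relax} with objectives that also weigh the trajectory (CP) or impose a hard/soft priority (HSC) — this is precisely where the weight-combination function $f_w$ and a lexicographic ordering are needed — and (ii) TR, whose semantics is defined by relaxing deadlines in the formula rather than by symbol edits on satisfying words; there the correctness of the reduction rests entirely on the annotated-automaton result of~\cite{Vasile-2017-twtl}, and verifying that the induced cost $\ell(\cdot)$ has a bounded-memory representation (so that Asm.~\ref{asm:user-pref} is not violated) is the step that most deserves care.
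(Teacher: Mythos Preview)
Your proposal is correct and follows essentially the same approach as the paper: a case-by-case constructive reduction, supplying for each of CP, MVP, MRP, HSC, PS, TR an explicit user task preference $(\Pref,w_\Pref)$ (identity, insertion, same-length substitution, prefix, and the $\FA_\infty$ construction from~\cite{Vasile-2017-twtl}) so that the feasible set and objective of Prob.~\ref{pb:formulation-min-relax} match. The only cosmetic differences are that the paper folds the TS weight directly into $w_\Pref$ for CP and uses a big-$M$ penalty rather than a lexicographic $f_w$ for HSC, and for TR it keeps the unrelaxed $\FA$ as the specification with $\Pref=\CA{L}(\FA)\times\CA{L}(\FA_\infty)$ instead of taking $\FA:=\FA_\infty$ with the identity relation; your variants are equivalent and, if anything, slightly cleaner regarding Asm.~\ref{asm:user-pref}.
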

\begin{proof}
We provide a constructive proof for each case.
In all cases, we consider additive penalties.
\paragraph{CP} $\Pref= \{(\BF{o}, \BF{o}) \mid \BF{o} \in \CA{L}(\FA)\}$,
and $w_\Pref(\BF{o}, \BF{o}) = w_\TS(\BF{x})$,
where $\BF{x}$ is the trajectory generating $\BF{o}$,
i.e.,  $\BF{o} = h(\BF{x})$.
Equivalently, $R = (\any/\any, 0)^*$.
\paragraph{MVP} $\Pref = \{(\BF{o}, \BF{o}_I) \mid \BF{o}_I \in \CA{L}(\FA), I \subseteq \range{\card{\BF{o}}}\}$
and $w_\Pref(\BF{o}, \BF{o}_{\range{\card{\BF{o}}} \setminus I}) = \varpi \card{I}$.
Equivalently, $R = ((\any/\any, 0) \ | \ (\epsilon/\any, \varpi \card{I}))^*$.
\paragraph{MRP} $\Pref=\{ (\BF{o}, \BF{o}') \mid \BF{o} \in \CA{L}(\FA), \BF{o}' \in (2^\AP)^*, \card{\BF{o}'}=\card{\BF{o}}\}$
and $w_\Pref(\BF{o}, \BF{o}')= \sum_{i=0}^{\card{\BF{o}}} c(\BF{o}_i, \BF{o}'_i)$.
Equivalently, $R = ((\sigma/\sigma', c(\sigma, \sigma')) \mid \forall \sigma \in 2^\AP)^*$.
\paragraph{HSC} $\Pref = \{(\BF{o},\BF{o}) \mid  \BF{o} \in \CA{L}(\FA_H)\}$,
$w_\Pref(\BF{o}_1, \BF{o}_2) =
\begin{cases}
w_\TS(\BF{x}) & \text{if } \BF{o}_2 \in \CA{L}(\FA_H) \cap \CA{L}(\FA_S)\\
M + w_\TS(\BF{x}) & \text{if } \BF{o}_2 \in \CA{L}(\FA_H) \setminus \CA{L}(\FA_S)
\end{cases}
$,
$\BF{x}$ is the trajectory of $\TS$ generating $\BF{o}_2$, and $M \gg 1$.
\paragraph{PS} $\Pref = \{(\BF{o}, \BF{o}_{0,k}) \mid
\forall k \in \range{\card{\BF{o}}-1}\}$,
and $w_\Pref(\BF{o}, \BF{o}_{0, k}) = \card{\BF{o}} - k - 1$.
Equivalently, $R = (\any/\any, 0)^* (\epsilon/\any, 0)^*$ where, $d_O = \card{\BF{o}}-1 $ and ${d_{p}} = k$.
\paragraph{TR}
$\Pref = \{(\BF{o}, \BF{o}') \mid \BF{o} \in \CA{L}(\FA), \BF{o}' \in \CA{L}(\FA_\infty)\}$,
and $w_\Pref(\BF{o}, \BF{o}') = \lrel{\BS{\tau}}$,
where $\BS{\tau}$ is the temporal relaxation of $\phi$
associated with $\BF{o}'$,
and $\FA$, $\FA_\infty$ are the DFAs
for $\phi$ and $\phi(\infty) = \bigvee_{\BS{\tau}} \phi(\BS{\tau})$,
see~\cite{Vasile-2017-twtl} for details.
\end{proof}
}

\section{\uppercase{Unified Automata-based Framework}}
\label{sec:framework}

In this section, we introduce a unified automata-based framework
to capture user preference specifications, and to synthesize
minimal relaxation policies.


\subsection{Relaxation Specification}
\label{sec:relax-specification}

We consider two classes of problems
related to task changes and deadline relaxations.

\subsubsection{Task Relaxation}

In this problem class, we allow parts of the specification
to be substituted and/or removed.
%
%
%
%
Preferences can be given in many formats, e.g., regular expressions and grammars,
{\color{black}see Ex.~\ref{ex:task-relaxation2}
We introduce weighted finite state edit systems to represent
user task relaxation preferences {\color{black} with bounded memory (Asm.~\ref{asm:user-pref})},
where weights capture translation penalties.}

\begin{figure}[t]
	\subfloat[]{
	\scalebox{0.57}{
	\begin{tikzpicture}[->,>=stealth',shorten >=1pt,auto,node distance=4cm, semithick,
        initial text={}]
        \tikzstyle{every state}=[text=black]
        
        \node[initial,state,accepting]   (z0)     {$z_0$};
        \node[state]           (z1) [right of=z0] {$z_1$};
        \node[state]           (z2) [below of=z1] {$z_2$};
        \node[state]           (z3) [left  of=z2] {$z_3$};
        
        \path[->] (z0) edge[out=165,in=120,loop] node[above,right] {$(\{P1\}/\{P1\}, 0)$, $(\{P2\}/\{P2\}, 0)$, $(\{\}/\{\}, 0)$} (z0)
                  (z0) edge             node[below] {\disha{$(\{S1\}/\{P2\}, 0)$}} (z1)
                  (z0) edge             node[sloped,above] {\disha{$(\{Q1\}/\{P1\}, 0)$}} (z3)
                  (z1) edge             node[sloped,above] {\disha{$(\{S1\}/\{P2\}, 0)$}} (z2)
                  (z2) edge             node[sloped,above] {\disha{$(\{S2\}/\epsilon, d2)$}} (z0)
                  (z3) edge[bend left]  node[sloped,below] {\disha{$(\{Q1\}/\epsilon, d1)$}} (z0)
                  ;
    \end{tikzpicture}
    \label{fig:wfse-word}
    }
	}
	\subfloat[]{
	\scalebox{0.57}{
	\begin{tikzpicture}[->,>=stealth',shorten >=1pt,auto,node distance=4cm, semithick,
        initial text={}]
        \tikzstyle{every state}=[text=black]
        
        \node[initial,state,accepting]   (z0)     {$z_0$};
        
        \path[->] (z0) edge[loop above] node[above, text width=6.3cm] {
        $(\{\}/\{\}, 0)$, \quad\quad\ $(\{S1\}/\{S2\}, 0)$,
        $(\{P1\}/\{P1\}, 0)$, $(\{S1\}/\{S1\}, 0)$,
        $(\{P1\}/\{Q1\}, d1)$, $(\{P1, P2\}/\{Q1,P2\}, d1)$,
        $(\{Q1\}/\{Q1\}, 0)$, $(\{P1, P2\}/\{P1, S1\}, d2)$,
        $(\{P2\}/\{P2\}, 0)$, $(\{P1, P2\}/\{Q1, S1\}, d1+d2)$,
        $(\{P2\}/\{S1\}, d2)$
        } (z0)
                  ;
    \end{tikzpicture}
    }
    \label{fig:wfse-symbol}
	}

	\caption{\small{\textcolor{black}{\protect \subref{fig:wfse-word} WFSE for word-word translations, \protect\subref{fig:wfse-symbol} WFSE corresponding to symbol-symbol translations}}}
	\label{fig:wfse-examples}
	\vspace{-15pt}
\end{figure}
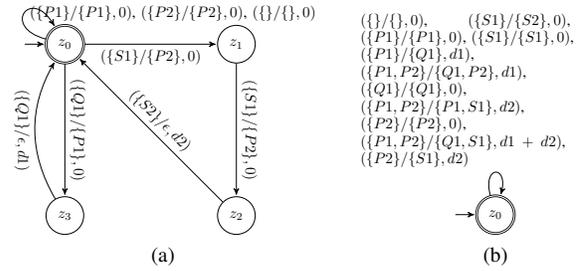 

\begin{definition}[Weighted Finite State Edit System]
\label{def:wfses}
A weighted finite state edit system (WFSE)
is a weighted DFA $\ES = (Z_\ES, z^\ES_0, \Sigma_\ES, \delta_\ES, F_\ES, w_\ES)$, where 
$\Sigma_\ES = \left(2^\AP\cup \{\epsilon\}\right) \times \left(2^\AP\cup \{\epsilon\}\right) \setminus \{(\epsilon, \epsilon)\}$,
$\epsilon$ denotes a missing or deleted symbol,
and $w_\ES\colon \delta_\ES \to \BB{R}$ is the transition weight function.
\end{definition}

The alphabet $\Sigma_\ES$ captures word edit operations (addition, substitution, or deletion of symbols).
A transition $z' = \delta_\ES(z,(\sigma, \sigma'))$
{\color{black}
has input, output symbols $\sigma$ and $\sigma'$.
}
Given a word $\vec{\BS{\sigma}} = (\sigma_0, \sigma'_0) (\sigma_1, \sigma'_1) \ldots (\sigma_r, \sigma'_r) \in \CA{L}(\ES)$, {\color{black}$r=|\vec{\BS{\sigma}}|-1$,}
we call $\BS{\sigma}= \sigma_{i_0} \sigma_{i_1}\ldots \sigma_{i_n} \in (2^\AP)^*$
and $\BS{\sigma}'=\sigma'_{j_0} \sigma'_{j_1}\ldots \sigma'_{j_m} \in (2^\AP)^*$ obtained by removing only the symbol $\epsilon$,
the input and output words,
where
$m, n, r \in \BB{Z}_{>0}$, $m,n < r$,
$0\leq i_0< \ldots<i_m< r$ and $0\leq j_0< \ldots<j_n< r$.
Moreover, we say that $\ES$ transforms $\BS{\sigma}$ into $\BS{\sigma}'$.

Note that WFSE is a special type of finite state transducer 
where the input and output alphabets are the same,
and the empty symbol $\epsilon$ cannot be mapped to itself.
Moreover, the weights capture translation penalties and can
depend on the states and symbol translation pairs.

{\color{black}
We can use standard methods~\cite{Hopcroft2006} to translate REs,
expressing relaxation rules, into WFSEs.}

\subsubsection{ Temporal Relaxation}
Temporal relaxation allows delays with respect to deadlines
in the satisfaction of specifications.
In the following, we consider annotated automata $\FA_\infty$
computed from TWTL formulae~\cite{Vasile-2017-twtl} that
capture all possible deadline relaxations.
{\color{black}Formally, given TWTL formula $\phi$, an annotated DFA $\FA_\infty$ is
a DFA such that $\mathcal{L}(\FA_\infty) = \mathcal{L}(\phi(\infty))$,
where $\phi(\infty)$ is satisfied by a word $\BF{o}$
if and only if $\exists \BS{\tau}' < \infty$ s.t. $\BF{o} \models \phi(\BS{\tau}')$.
When the transition weights of the TS $\TS$ represent (integer) durations,
we construct an extended TS $\widehat{\TS}$ from $\TS$ such that
all transitions have unit weight (duration).
This additional step ensures that transitions of $\FA_\infty$ and $\TS$
are synchronized.
See~\cite{Vasile-2017-twtl} for details.
}

\subsection{Product Automaton Construction}
\label{sec:3-way-pa}

The optimal control policy that takes into account
the user preferences is computed based on a product automaton
between three models:
(a) the motion model (TS) of the robot $\TS$;
(b) the user preferences WFSE $\ES$;
and (c) the specification DFA $\FA$.

\begin{definition}[Three-way Product Automaton]
\label{def:3-way-product}
Given a TS $\TS=(X, x^\TS_0, \delta_\TS, AP, h, w_\TS)$,
a WFSE system $\ES = (Z_\ES, z^\ES_0, \Sigma_\ES, \delta_\ES, F_\ES, w_\ES)$,
and a specification DFA $\FA = (S_\FA, s^\FA_0, 2^\AP, \delta_\FA, F_\FA)$,
the  product automaton is
a tuple $\PA_\ES = (Q^\ES, q^\ES_0, \delta^\ES_{\PA}, F^\ES_\PA, w^\ES_\PA)$
also denoted by $\PA_\ES = \TS \times_\ES A$, where: 
\begin{itemize}
  \item $Q^\ES = X \times Z_\ES \times S_\FA \cup \{q^\ES_0\}$ is the state space;
  \item $q^\ES_0=(\virtual, z_0, s_0)$ is the initial state;
  \item $\delta^\ES_\PA \subseteq Q^\ES \times Q^\ES$ is the set of transitions;
  \item $F^\ES_{\PA} = X \times F_\ES \times F_\FA \subseteq Q^\ES$ is the set of accepting states;
  \item $w^\ES_\PA\colon \delta^\ES_\PA \to \BB{R}$ is the transition weight function.
\end{itemize}
\end{definition}

A transition $((x, z, s), (x', z', s')) \in \delta^\ES_\PA$ if $(x, x') \in \delta_\TS$ or $x' = x_0$,
$z' =\delta_\ES(z, (\sigma, \sigma'))$, $s'= \delta_\FA(s, \sigma')$, and $\sigma = h(x')$.
Note that we introduce a virtual TS state $\virtual$ connected to $x_0$ to avoid the definition of a set of initial states and associated start weights.
{\color{black} State $\virtual$ is only used to simplify notation and implementation, and does not correspond
to an actual state of the robot.}
The weight function is $w^\ES_\PA((q, q')) = f_w(w_\TS(x, x'), w_\ES(z, z'))$,
where $f_w(\cdot)$ is an arbitrary function, $q=(x, z, s)$, $q'=(x', z', s')$, and
$w_\TS(\virtual, x^\TS_0)=1$ by convention.
A trajectory of $\PA_\ES$ is said to be accepting only if it ends in a state that belongs to the set of final states $F^\ES_\PA$.
The projection of the trajectory $\BF{q}=q^\ES_0 q_1 \ldots q_n$ onto the TS $\TS$ is $\BF{x}=x_0 x_1 \ldots x_{n-1}$, where $q^\ES_0$ is the initial state of $\PA_\ES$, and $q_k = (x_{k-1}, z_k, s_k)$, for all $k \in \range{1,n}$.
{\color{black}
Similar to~\cite{VaBe-RSS-2014,Vasile-2017-twtl}, we construct $\PA_\ES$ such that only states that are reachable from the initial state, and reach a final state.
}
\vspace{-8pt}
\begin{algorithm}
\caption{Optimal Planning Algorithm -- $\mathtt{Plan()}$}
\label{alg:control-synthesis}
\DontPrintSemicolon
\KwIn{TS $\TS$, TL specification $\phi$ (scLTL, TWTL, etc.),
user task specification RE $(\Pref, w_\Pref)$,
product weight computation function $f_w$}
\KwOut{optimal policy for the TS $\TS$}
\BlankLine
{\color{black}Translate $\phi$ to DFA $\FA$ using off-the-shelf tools}\\
{\color{black}Translate $(\Pref, w_\Pref)$ to WFSE $\ES$ using standard methods}\\
Compute the three-way PA $\PA_\ES = \TS \times_\ES \FA$ \\
Compute shortest path $\BF{q}^*$ in $\PA_\ES$ from the initial state $q^\ES_0$ to a final in $F^\ES_\PA$ using weights $w^\ES_\PA$\\
Project $\BF{q}^*$ onto $\TS$ to obtain the optimal policy $\BF{x}^*$\;
\Return{$\BF{x}^*$}
\end{algorithm}

\vspace{-25 pt}
\subsection{Optimal Planning}
\label{sec:opt_control}

The general planning procedure $\mathtt{Plan()}$ is outlined in Alg.~\ref{alg:control-synthesis}.
Given a TS $\TS$, {\color{black}TL specification $\phi$, and user task preference $(\Pref, w_\Pref)$}, Alg.~\ref{alg:control-synthesis}
first translates $\phi$ to DFA $\FA$ and $(\Pref, w_\Pref)$ to WFSE $\ES$ (lines 1-2).
Next, it computes the three-way product automaton (line 3).
Similar to the standard procedure, the trajectory is obtained
by projecting onto $\TS$ the shortest path $\BF{q}^*$ from
the initial state $q^\ES_0$ of the PA to an accepting state
in $F^\ES_\PA$ (lines 2-3).
The weights $w^\ES_\PA$ of $\PA_\ES$ used for computing
the shortest path depend on whether we wish to minimize
task or deadline relaxation as shown next.

\subsubsection{Task Cost}
Let $\BF{q}=q_{0} \ldots q_{n}$ be a trajectory of $\PA_\ES$.
The task cost of $\BF{q}$ is
\begin{align}
    C_\ES(\BF{q}) &= \sum_{k=0}^{n-1} c_\ES(q_{k}, q_{k+1}),\\
    c_\ES(q_{k}, q_{k+1}) &= f_w(w_\TS(x_k, x_{k+1}), w_\ES(z_k, z_{k+1})),
\end{align}
where $c_\ES$ is the transition weight, $q_{k} = (x_k, z_k, s_k)$
for all $k \in \range{0, n}$.
The task cost takes into account the penalties associated with substitution and deletion of tasks represented as sub-words of the TS's output words.
The optimal trajectory is computed as $\mathtt{Plan}(\TS, \FA, \ES, c_\ES)$ using Alg.~\ref{alg:control-synthesis}.

\subsubsection{Temporal Relaxation Cost}
In this case, the cost is captured by LTR introduced in Sec.~\ref{sec:temporal-relaxation}
that aggregates all delays captured by the annotated specification DFA $\FA_\infty$.
The PA is denoted by $\PA_{\ES_0}$ and
the optimal trajectory is computed
as $\mathtt{Plan}(\widehat{\TS}, \FA_\infty, \ES_0, c_{TR})$,
where $\widehat{\TS}$ is the extended TS,
and $\ES_0$ is a trivial WFSE with a single node and a pass-through self-loop (leaves symbols unchanged and has weight 0).
The temporal relaxation cost of $\BF{q}$ is
\begin{align}
C_{TR}(\BF{q}) &= |\BF{q}|,\\
c_{TR}(q, q') &= 1, \forall (q, q') \in \delta^{\ES_0}_\PA,
\end{align}
where $\BF{q}$ is a trajectory of $\PA_{\ES_0}$.
Minimizing the length of trajectory $\BF{q}$ is equivalent to
minimizing $\lrel{\BF{\tau}}$.
This follows from the results in~\cite{Vasile-2017-twtl}.


\subsubsection{Bi-objective Cost}
We consider cases where a robot can trade-off between changing tasks and delaying their satisfaction.
The solution combines an annotated specification automaton $\FA_\infty$ with a (non-trivial) relaxation preference WFSE $\ES$ to compute policies in $\widehat{T}$ using 
$\mathtt{Plan}(\widehat{\TS}, \FA_\infty, \ES, c_{bi})$.
The blended cost of a trajectory $\BF{q}$ is
\begin{align}
    \mathcal{C}_{bi}(\BF{q}\mid\lambda) &= \lambda \mathcal{C}_{\ES}(\BF{q}) + (1 - \lambda) \mathcal{C}_{TR}(\BF{q}), \\
    c_{bi}(q, q'\mid\lambda) &= \lambda c_{\ES}(q, q') + (1 - \lambda) c_{TR}(q, q'), 
\end{align}
where the $\lambda \in [0, 1]$ is a parameter that trades-off between the two objectives,
and $c_{bi}$ is the bi-objective transition cost.

We compute the Pareto-optimal trajectories and the Pareto-front using
a parametric Dijkstra's algorithm~\cite{young1991faster}.
The Pareto-front for our bi-objective problem is composed of a finite number
of isolated points in the $c_{TR} \times c_\ES$ cost space.
This follows from the finite size of $\PA_\ES$,
and the fact that any Pareto-optimal policy must be a simple path in $\PA_\ES$.
This implies that $c_{bi}$ as a function
of $\lambda$ is a continuous piecewise-affine function,
where each piece corresponds to a point on the Pareto-front and an interval of $\lambda$ values.

Note that in this setup both types of relaxations are allowed to happen at the same time.

{\color{black}
\begin{remark}
The proposed framework reduces planning to standard shortest path problems on $\PA_\ES$
rather than various custom methods.
Thus, automata-based methods for the canonical case can be immediately used to solve
the \emph{minimum relaxation problem} Pb.~\ref{pb:formulation-min-relax}.
\end{remark}
}
{\disha{
\begin{remark}
As the original MRP problem operates on symbol-symbol basis, the length of the original word and relaxed word needs to be the same (See proof of Prop. 4.1(c)). This is not a requirement for our framework as it operates on groups of symbols (words). Thus, in the following sections, we call the substitution problem as Minimum Word Revision Problem (MWRP). 
\end{remark}}}

\subsection{Complexity Analysis}
\label{sec:complexity-automata-construction}

The construction of the three-way PA $\PA_\ES$, line 3 in Alg.~\ref{alg:control-synthesis}, takes $\mathcal{O}(\card{\delta_{\TS}} \times \card{\delta_{\ES}} \times \card{\delta_{\FA_\infty}})$.
Computing the shortest path $\BF{q}^*$ (line 4) is done with
Dijkstra's algorithm which takes $\mathcal{O}(\card{\delta_{\PA}^\ES} + \card{Q^\ES} \log \card{Q^\ES})$.
Lastly, projection onto $\TS$ (line 5) is linear in the size of $\BF{q}$.

{\color{black}
Crucially, our framework has the same asymptotic complexity as
custom planning methods for the problems in Sec.~\ref{sec:background}.
MVP, MRP, and PS operate one symbol at a time, see the proof of Prop.~\ref{thm:univ-prob}.
Their associated WFSEs have a single state with a self-loop, i.e., $\card{\delta_\ES}=1$ (see Fig.\ref{fig:wfse-symbol}).
Thus, the PA construction complexity degenerates to $\mathcal{O}(\card{\delta_{\TS}} \times \card{\delta_{\FA_\infty}})$.
For TR, the complexity also reduces since the WFSE has a single state;
the deadline relaxation is captured by $\FA_\infty$~\cite{Vasile-2017-twtl}.
Lastly, for HSC, we can choose $\FA_H$ as specification DFA, and the WFSE can have
same structure as $\FA_S$ with penalty $M\gg 1$ if the soft constraint is not satisfied.
For brevity, we omit the formal details.
Thus, the complexity of PA construction becomes
$\mathcal{O}(\card{\delta_{\TS}} \times \card{\delta_{\FA_H}} \times \card{\delta_{\FA_S}})$,
the same as for custom methods (due to the quadratic complexity of language intersection~\cite{Hopcroft2006}).
}


\section{CASE STUDIES}
\label{sec:case-study}
 
In this section, we present \textcolor{black}{a case study highlighting different instances of specification relaxations.}
We consider a self-driving car in an urban environment as shown in Fig.~\ref{fig:with_obstacle}
tasked with visiting specified task regions (green rectangles) while avoiding obstacles.
The possible routes that a vehicle can follow are shown using \textcolor{black}{blue} lines, the permissible directions indicated using arrows,
whereas the waypoints are shown using \textcolor{black}{black circles.  The obstacle $O$ shown as a red cross as well as the local obstacles $O2$ and $O3$ shown using orange crosses}  (Fig.\ref{fig:with_obstacle}) may or may not be present. 
The \textit{`No entry'} symbol and the obstacle (if present) together make it impossible to reach $T1$ and in turn, make the \textcolor{black}{red dotted path infeasible. Similarly, even if the local obstacle $O3$ is not present, the smaller $T1$ region next to $T2$ cannot be stayed at due to the no parking zone. }

The motion of the robot is modeled as a weighted TS $\TS$ (Fig.\ref{fig:TS}) with states and transitions representing the waypoints (white states) and task locations (green states), and roads between them, respectively.
The weights associated with transitions represent their duration.
The transitions to the green task location states have weight one,
and may, e.g., capture parking. Note that for all states in Fig. \ref{fig:TS}, self-transitions exist, but have not been included in the figure for simplicity.
Self-transitions allow the robot to be stationary at all location, {\color{black} except for the purple states 14 and 15}.
 \textcolor{black}{The purple states correspond to local obstacles $O2$ and $O3$ whereas the obstacle $O$ is shown using a red state. } 
The initial state of the robot is state $0$. {\color{black}The transitions shown using yellow arrows and obstacle $O2$ are present only for problems 6-8, node 15 only for problem 8.}

%

In the following, we present multiple scenarios in the self-driving setting that showcase
the CP, MVP, MWRP, HSC, TR, and bi-objective problems.
The specifications, user preferences and the costs for each problem are provided in Tab.~\ref{table:1}.

All specifications are translated to DFAs using off-the-shelf tools~\cite{Duret.13.atva,Vasile-2017-twtl}.
For MWRP, HSC-MWRP, bi-objective problems, the relaxation preferences allow the substitution of $T1$ with $T2$, $T3$, $T4$, and $T5$ with costs 5, 8, 11, and 14, respectively.
For MVP and HSC-MVP, the cost of not visiting $T1$, deletion cost, is 10.
For HSC, the cost of not taking the $bridge$, violation of the soft constraint, is 10.
The preferences and costs are captured by a WFSE with $\card{Z_\ES} = 5$.

We first consider a canonical scenario.
Subsequently, we present extensions within the same environment wherein satisfying the given specification becomes infeasible without relaxation. Feasibility depends on the presence of \textcolor{black} {obstacles $O, O2, O3$ }indicated in Tab.~\ref{table:1}
for each scenario.

For MVP, MWRP, and HSC problems, we consider scenarios with $O$ present
(e.g., road construction, temporary closure),
thereby making visits to $T1$ infeasible.



\begin{figure}[!htb]
	\centering
	\subfloat[]{\includegraphics[scale=0.25,trim=0.5cm 0.6cm 0.6cm 2cm, width=\columnwidth]{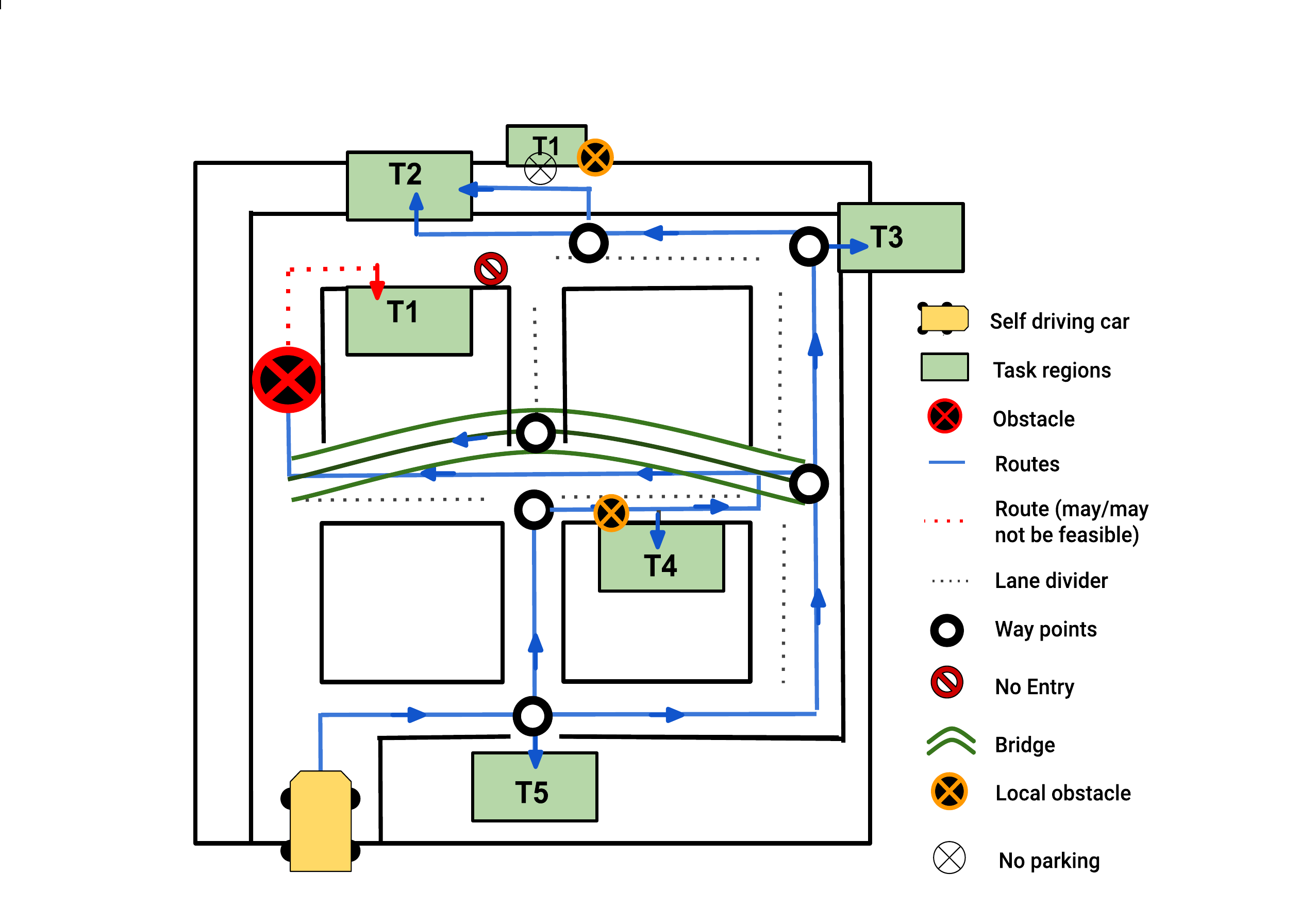}\label{fig:with_obstacle}}\\
	\centering
	\subfloat[]{\includegraphics[scale=0.3,trim=0.5cm 0.5cm 0.5cm 0.5cm, width=\columnwidth]{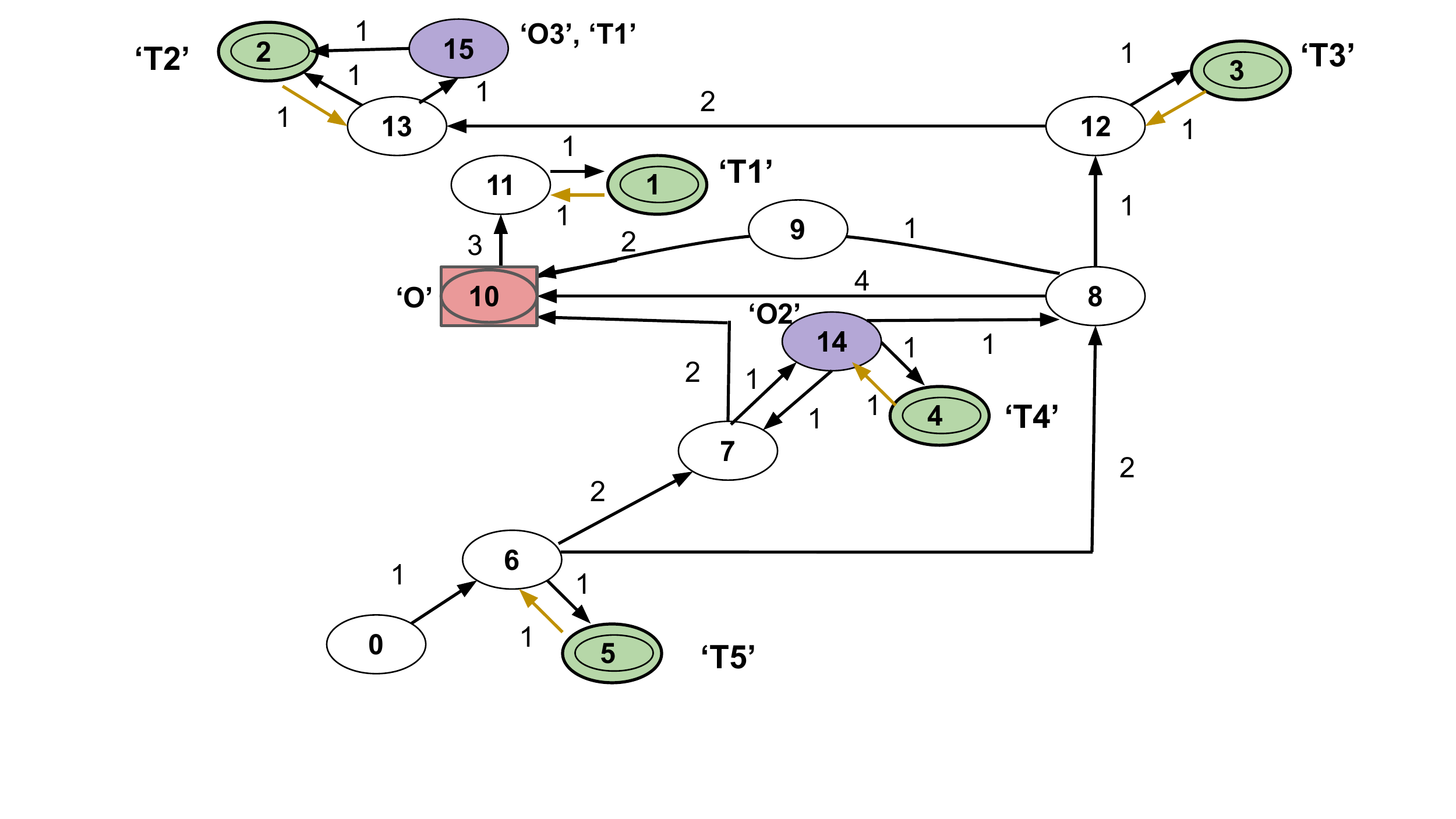}\label{fig:TS}}
	\caption{\protect\subref{fig:with_obstacle}:\small{ Task regions $T1$-$T5$, the obstacles $O,O2,O3$.} 
	Fig.~\protect\subref{fig:TS}: Robot motion model abstraction as a TS, where the green, red, and white nodes correspond to task regions, obstacle and waypoints, respectively. The atomic propositions are shown next to the states. The transition weights indicate the travel duration between states.}
	\label{fig:case_study}
	\vspace{-12pt}
\end{figure}

\subsubsection{Canonical Problem (CP)}
The task specification is ``Visit T1'', i.e., $\phi = \mathbf{F} \; T1$, where $\mathbf{F}$ is the \textit{eventually} operator.
As obstacle $O$ is absent and thus, 
no relaxations are required, this case corresponds to a pass-through operation (no substitutions or deletions) in $\ES$.
The optimal $\TS$ trajectory is $(0, 6, 7, 10, 11, 1)$ which corresponds to the shortest path to $T1$ in Fig. \ref{fig:TS} with the total cost $\mathcal{C}_{\ES}=9$.

\subsubsection{Minimum Violation Problem (MVP)}
We consider the specification ``Visit $T1$ and $T4$ while avoiding obstacles''
that translates to the scLTL formula
$\phi = \neg O \, \mathcal{U} \, T1 \land \neg O \, \mathcal{U} \, T4$.
%
The optimal trajectory of $\TS$ is 
$( 0, 6, 7, 14, 4)$ as $T1$
is not reachable in the presence of $O$.
The optimal cost is 15, including the cost of 10 for skipping $T1$.
\subsubsection{Minimum Word Revision Problem (MWRP)}
In this case, the task specification is ``Visit $T1$ while avoiding obstacles''.
If the task $T1$ is not feasible, revise the task according to the preferences given above.
Here $\phi = \neg O \; \mathcal{U} \; T1$.
The optimal $\TS$ trajectory accepted by $\FA$ is $(0, 6, 8, 12, 13, 2)$
with an optimal cost $\mathcal{C}_{\ES}$ = 11,
where $T1$ is substituted by $T2$ with cost 5.


\subsubsection{Hard-Soft Constraints (HSC)}
This problem is implemented both in the presence and absence of obstacle $O$.
The task specification for both scenarios is
``Visit $T1$ while avoiding obstacles, and, if possible, take the $bridge$".
The specification is $\phi = \phi_h \land \phi_s$,
where $\phi_h = \neg O\, \mathcal{U}\, T1$ and $\phi_s =\mathbf{F}\, bridge$
are the hard and soft constraints, respectively.
The cost of not satisfying $\phi_s$ is 10 and is added to the WFSE.
\\
\textit{HSC-CP}:
Without obstacle $O$, the case is analogous to CP and, thus,
the optimal $\TS$ trajectory is $(0, 6, 8, 9, 10, 11, 1)$
with optimal cost $\mathcal{C}_{\ES} = 10$.\\
\textit{HSC-MWRP}:
In this case, $T1$ is substituted by $T2$ which has the lowest substitution cost, again due to $O$.
Thus, the optimal $\TS$ trajectory is $(0,6,8,12,13,2)$
with optimal cost $\mathcal{C}_{\ES} = 20$
that includes the substitution cost $5$ and
the violation cost $10$ for not going over the $bridge$.\\
\textit{HSC-MVP}:
{\color{black}$\phi_h= (\neg O\, \mathcal{U}\, T1) \land (\neg O\, \mathcal{U}\, T5)$, $\phi_s = \mathbf{F}\, \textit{bridge}$}. With obstacle $O$, only $T5$ can be visited.
In the MVP case, the optimal $\TS$ trajectory is $(0,6,7,5)$
with an optimal cost of $\mathcal{C}_{\ES} = 32$ that includes
the costs of 10 for not visiting $T1$
and of 10 for not taking the $bridge$.

The penalty for not taking the bridge is added to the WFSE
when taking transitions $6 \rightarrow 7$ and $8 \rightarrow 12$,
since these indicate when the robot has diverted from satisfying $\phi_s$.

\subsubsection{Temporal Relaxation (TR)}
In this example, the specification
``Visit and stay in $T2$ for 2 time units within 6 time units."
translates to a TWTL formula $\phi=[{H}^2 T2]^{[0,6]}$,
where ${H}$ is the \textit{hold} operator.
Note that, the minimum travel time to $T2$ from state $0$ is 7 time units, see Fig \ref{fig:TS}.
Thus, the specification is relaxed to $\phi(\tau) = [H^2 T2]^ {[0,6+\tau]}$ with $\tau=3$
obtained by the optimal trajectory $(0, 6, 8, 12, 13, 2,2,2)$.
The optimal cost is $\mathcal{C}_{TR} = 9$ corresponding to $\lrel{\BS{\tau}}=3$.
\textcolor{black}{\subsubsection{Multiple word-word translations}Now consider that the local obstacle $O2$ is present. If the task specification is: ``Visit T4 for 2 consecutive instances and next, visit regions T4 and then T2 and next, visit regions T4 and T1. Avoid obstacles all the time."  The corresponding scLTL specification is: $ \neg O2 \; U\; (\mathbf{F}(T4\land X\; T4) \land  XX \;((\mathbf{F}\; T4 \land \mathbf{F}\; T2 \land (\neg T2\; \mathcal{U} \; T4)) \land X (\mathbf{F} \; T4 \land \mathbf{F} \;T1 \land (\neg O \; \mathcal{U}\; T1) \land (\neg T1\; \mathcal{U} \; T4))))$. If the task is infeasible, the substitution rules are: Substitute the first two instances of T4 (i.e. \textit{$\mathbf{F} T4 \land$ X T4})  by T5 with a total penalty of 6. Substitute the next occurrence of T4 by one T5 and two T3s with a total penalty of 4. Finally, delete the last occurrence of T4 with a penalty of 10 and substitute T1 by T2 with a penalty of 7. Fig. \ref{fig:complex_wfse} shows a wfse constructed from these relaxation rules. Note that all self loops correspond to the translation (\{\},\{\},1). z0-z6 denote the states and the edges represent the allowed edit operations. The trajectory obtained after relaxation is: (0,6,5,5,5,6,8,12,3,3,12,13,13,2,2) with a total cost $\mathcal{C}_\ES= 35$.}


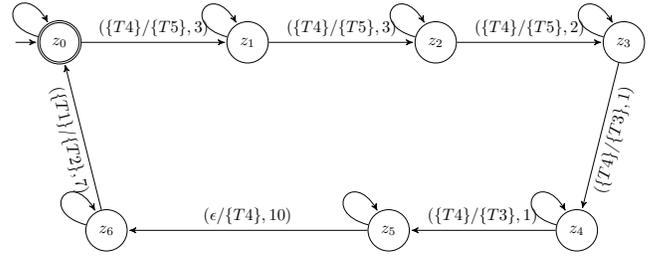
\begin{figure}[t]
    \centering
    \vspace{-15pt}
    \resizebox{0.5\textwidth}{!}{
    	\begin{tikzpicture}[->,>=stealth',shorten >=1pt,auto,node distance=4    	cm, semithick,
        initial text={}]
        \tikzstyle{every state}=[text=black]
        
        \node[initial,state,accepting]   (z0)     {$z_0$};
        \node[state]           (z1) [right of=z0] {$z_1$};
        \node[state]           (z2) [right of=z1] {$z_2$};
        \node[state]           (z3) [right of=z2] {$z_3$};
        \node[state]           (z4) [below of=z3, xshift=-10mm] {$z_4$};
        \node[state]           (z5) [below of=z2, xshift=-10mm] {$z_5$};
        \node[state]           (z6) [below  of=z0, xshift=10mm] {$z_6$};        
        \path[->] (z0) edge[out=165,in=120,loop] (z0)
                  (z0) edge             node[above] {$(\{T4\}/\{T5\}, 3)$} (z1)
                  (z1) edge[out=165,in=120,loop] (z1)
                  (z1) edge             node[above] {$(\{T4\}/\{T5\}, 3)$} (z2)
                  (z2) edge[out=165,in=120,loop] (z2)
                  (z2) edge             node[above] {$(\{T4\}/\{T5\}, 2)$} (z3)
                  (z3) edge[out=165,in=120,loop] (z3)
                  (z3) edge             node[sloped,below] {$(\{T4\}/\{T3\}, 1)$} (z4)
                  (z4) edge[out=165,in=120,loop] (z4)
                  (z4) edge             node[above] {$(\{T4\}/\{T3\}, 1)$} (z5)
                  (z5) edge[out=165,in=120,loop] (z5)
                  (z5) edge             node[above] {$(\epsilon/\{T4\}, 10)$} (z6)
                  (z6) edge[out=165,in=120,loop] (z6)
                  (z6) edge            node[sloped,below] {$(\{T1\}/\{T2\}, 7)$} (z0)
                  ;
            
    \end{tikzpicture} }
    \caption{A WFSE for relaxation rules given in Problem 6}
    \label{fig:complex_wfse}
    
    \vspace{-22pt}
\end{figure}

\textcolor{black}{The above example demonstrates that our framework allows for multiple rules to be taken into account for different instances of the same symbol/word. Also, it highlights how the ordering is considered and retained during relaxation.}

\subsubsection{Bi-objective Problem}
In the absence of the $O$:
``Visit $T1$ for 3 time units between 0 to 5 time units''.
If not feasible, use the substitution preferences.
The DFA $\FA_\infty$ is obtained from the TWTL formula $\phi = [{H}^3 T1]^ {[0,5]}$.
We obtain a set of Pareto-optimal trajectories
and the corresponding intervals for parameter values.
The intervals indicate the range of possible trade-offs
between the two objectives $C_\ES$ and $C_{TR}$
that correspond to the same Pareto-optimal trajectory.
The set of solutions are:
(1) $(0,6,5,5,5,5)$, $\lambda \in [0, 0.25]$, corresponds to
the minimum temporal relaxation $\lrel{\BS{\tau}} = 0$ with $\mathcal{C}_{TR}={5}$
and $\mathcal{C}_\ES=29$;
(2) $(0,6,8,12,3,3,3,3)$, $\lambda \in [0.25, 0.33]$, strikes a balance
between task and temporal relaxations with $\mathcal{C}_{\ES}=20$,
$\mathcal{C}_{TR}={8}$, and $\lrel{\BS{\tau}} = 3$;
(3) $(0,6,8,9,10,11,1,1,1,1)$, $\lambda \in [0.33, 1]$, achieves
minimum task cost $\mathcal{C}_\ES=12$,
$\mathcal{C}_{TR}=12$ and $\lrel{\BS{\tau}} = 7$.
\textcolor{black}{ Having established the core idea, we now consider a word-word translation preference rule. Consider the specification ``Visit T5 for 1s within first 3s from the start and immediately next, proceed to visit T4 for 2s within first 7s followed by T1 for 1s within first 5s of the mission.  The local obstacle O2 should be avoided for the first 4s whereas the global obstacle O should be avoided for all 20s duration of the task."  The corresponding TWTL formula is: ``$ H^{20} \neg O  \land H^4 \neg O2 \land ([H^1 T5]^{[0,3]} \cdot [H^2 T4]^{[0,7]} \cdot [H^1 T1]^{[0,5]})$". The substitution rules are as follows: $T4 \mapsto^3 T5$, $T1 \mapsto^3 T5T3T2$, $T1 \mapsto^4 T3T2$, $T1 \mapsto^5 T3$.}
\textcolor{black}{The trajectories obtained after relaxation are: 1) $\allowbreak(0,6,5,5,5,5,5,\allowbreak5,5,6,12,3,12,13,2)$, \quad 2) $\allowbreak(0,6,5,5,\allowbreak6,7,\allowbreak14,4\allowbreak,4,\allowbreak4,14,8,12,3,3)$, \quad 3) $(0,6,5,5,6,7,14,4,4,4,\allowbreak14,8,12,3,3,13,2)$.}  
\textcolor{black}{\subsubsection{Difference between symbol-symbol and word-word translations} Given that obstacle $O$ is present and $O3$ is absent, the task is to visit T1 for 2 time units. As the route through node 15 is a no parking zone, there are no self-transitions on T1 at node 15. Given same substitution preferences as for MRP and if modelled as a wfse with a single state (see e.g., Fig. \ref{fig:wfse-symbol}) which is analogous to relaxations performed by the existing solutions, the shortest path obtained is (0,6,8,12,13,15,2) which violates the specification as it can pass through T1 (node 15) but not stay there. However,  our framework with a wfse model similar to Fig.\ref{fig:wfse-word} allows for this situation to be taken into account and the resultant trajectory is (0,6,8,12,13,2,2).}

\begin{table}[tb]
\centering
  \resizebox{\columnwidth}{!}{\begin{tabular}{|c|c|c|c|c|} 
  \hline
  User preference & Specification ($\phi$) & $O$? & Optimal trajectory	& cost \\ [0.5ex] 
  \hline\hline
  CP & $\mathbf{F}\, T1$ & No & $\{0, 6, 7, 10, 11, 1\}$ & 9\\
  \hline
  MVP & $(\lnot O \; \mathcal{U} \; T1) \land (\lnot O \; \mathcal{U} \; T4)$ & Yes &  $\{0, 6, 7, 4\}$ & 14\\
  \hline
  MWRP &  $\lnot O \; \mathcal{U} \; T1$ & Yes & $\{0, 6, 8, 12, 13, 2\}$ & 11\\
  \hline
  HSC-CP & $\phi_s = \mathbf{F} bridge$ , $\phi_h = \lnot O \; \mathcal{U} \; T1$ & No & $\{0, 6, 8, 9, 10, 11, 1\}$ & 10\\
  \hline
  HSC- MWRP & \ditto & Yes & $\{0,6,8,12,13,2,\}$ & 20\\
  \hline
  HSC-MVP & $\phi_s = \mathbf{F} bridge$ , $\phi_h = \lnot O \; \mathcal{U} \; T1 \; \land \lnot O \; \mathcal{U} \; T5$ & Yes & $\{0,6,7,5\}$ & 32\\
  \hline  
  TR & $[H^2 T2]^{[0,6]}$ & Yes & $\{0,6,8,12,13,2,2,2\}$ & 9\\
  \hline
  \end{tabular}}
  \caption{\small{User specifications, preferences, and the corresponding costs}}
  \label{table:1}
  \vspace{-18pt}
\end{table}

\begin{figure}[t]
    \vspace{-10pt}
	\centering
	\subfloat[]{\includegraphics[width=0.5\linewidth]{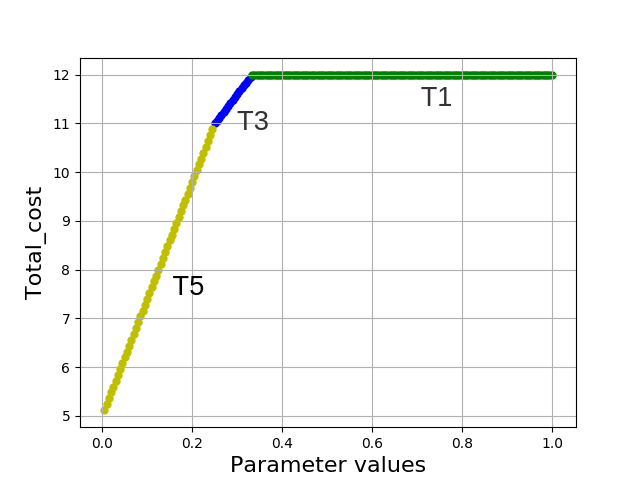}\label{fig:cost_param}}
	\centering
	\subfloat[]{\includegraphics[width=0.5\linewidth]{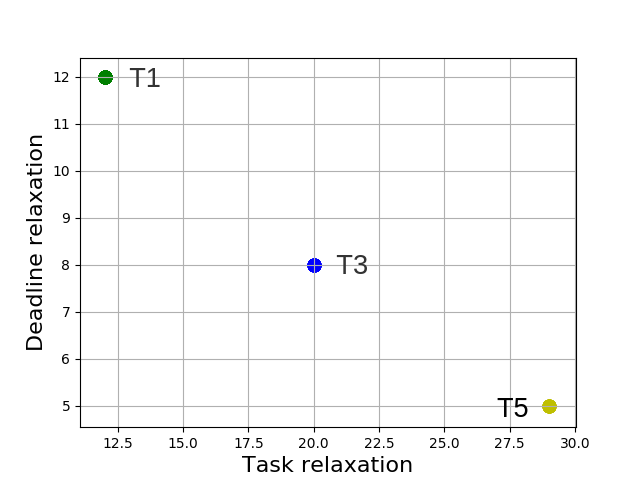}\label{fig:pareto}}
	\caption{\small{\protect\subref{fig:cost_param}: Total cost $\mathcal{C}_{bi}$ as a function of parameter $\lambda$, 
	 \protect\subref{fig:pareto}: Pareto front and the corresponding trajectories. \footnotesize{(Note: The points on the Pareto front correspond to the respective colored line segments in \ref{fig:cost_param})}}}
	\label{fig:Multi-objective Scenario}
	\vspace{-15pt}
\end{figure} 

\section{Runtime Performance Study}
\textcolor{black}{In this section, we study the effect of varying the sizes of $\TS$ and $\ES $ on the size and time taken for $\PA_\ES$ construction. This study was run on Dell Precision 3640 Intel i9-10900K with 64 GB RAM using python 2.7.12. We first keep the WFSE size constant at $\delta_{\ES} = 8$ and vary the size of TS from $\card{X}=118$ to $\card{X}=300$ corresponding to which the $\PA_\ES$ size goes from  $\card{Q^\ES} = 352$ to $\card{Q^\ES} = 898$.  Note that $\PA_\ES$ construction retains only the reachable states. Whereas, the standard Cartesian product between $\TS, \ES, \text{ and }\FA$ varies from 1872 nodes to 4784 nodes. Similarly, when the TS is kept constant at $\card{X}=22$ and the WFSE states are increased from $\delta_{\ES} = 14$ to $\delta_{\ES} = 453$, which corresponds to having n=453 substitution rules taken into account. For this, the $\PA_\ES$ size varies from $\card{Q^\ES} = 106$ to $\card{Q^\ES} = 986$ whereas the Cartesian product size increases from 616 nodes to 19932 nodes agreeing with the complexity analysis results from Section \ref{sec:complexity-automata-construction}. These results are shown in Fig. \ref{fig:model_vs_pa} where the model size refers to the number of states in the TS and the WFSE. Fig. \ref{fig:model_vs_pa_construct} presents the duration for $\PA_\ES$ construction as a function of number of edges in the TS (shown in blue) and the WFSE (shown in green).}

\begin{figure}[t]
	\centering
	\vspace{-20pt}
	\subfloat[]{\includegraphics[width=0.5\linewidth]{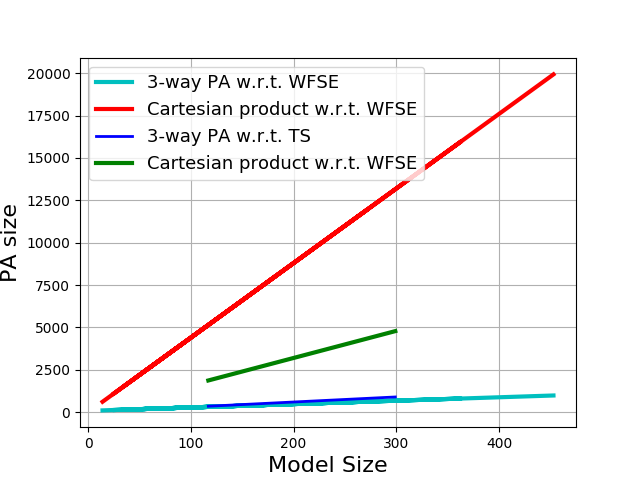}\label{fig:model_vs_pa}}
	\centering
	\subfloat[]{\includegraphics[width=0.5\linewidth]{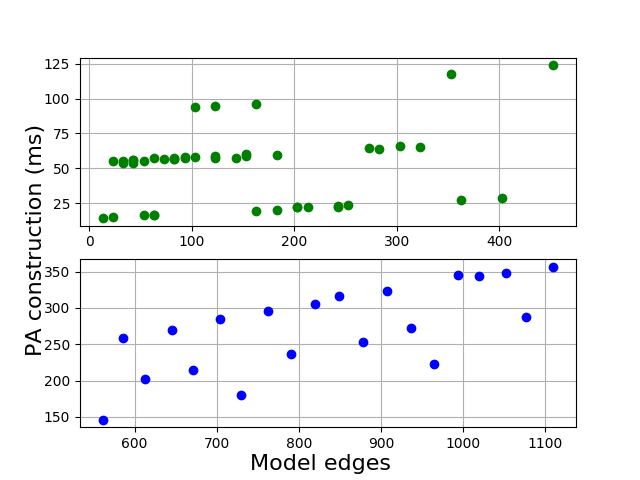}\label{fig:model_vs_pa_construct}}
	\caption{\small{\protect\subref{fig:model_vs_pa}:  $\PA_\ES$ size as a function of $\TS$ and $\ES$ sizes, \protect \subref{fig:model_vs_pa_construct}: $\PA_\ES$ construction duration as a function of $\TS$ (blue) and $\ES$ sizes (green).}}
 	\label{fig:case_study2}
	\vspace{-10pt}
\end{figure}

\section{CONCLUSIONS}
\label{sec:conclusions}
This work studies different existing approaches for specification relaxation and presents a unified and generalised automata-based framework that takes into account different instances of task and temporal relaxations and the trade-off between them.
We propose the construction of a three-way product automaton that allows for word to word translations and temporal relaxations simultaneously.
We demonstrate different instances of specification relaxation through case studies and show that the three-way product automaton construction scales linear in time with respect to transition system size.
Future work includes extending the framework for control synthesis for stochastic systems
while taking into account satisfaction probability.









\bibliographystyle{IEEEtran}
\bibliography{relaxed_specs}

\end{document}